\newcommand\blfootnote[1]{%
  \begingroup
  \renewcommand\thefootnote{}\footnote{#1}%
  \addtocounter{footnote}{-1}%
  \endgroup
}
\newcommand\va{{a}}
\newcommand\vf{{f}}
\newcommand\vg{{g}}
\newcommand\vh{{h}}
\newcommand\vp{{p}}
\newcommand\vs{{s}}
\newcommand\vu{{u}}
\newcommand\vv{{v}}
\newcommand\vw{{w}}
\newcommand\vx{{x}}
\newcommand\vz{{z}}
\newcommand\mC{{C}}
\newcommand\mD{{D}}
\newcommand\mE{{E}}
\newcommand\mF{{F}}
\newcommand\mH{{H}}
\newcommand\mI{{I}}
\newcommand\mS{{S}}
\newcommand\vxi{{\xi}}
\newcommand\vtheta{{\theta}}
\newcommand\vlambda{{\lambda}}
\newcommand\vnu{{\nu}}
\newcommand\vpi{{\pi}}
\newcommand\vsigma{{\sigma}}
\newcommand\sE{\mathbb{E}}
\newcommand\sN{\mathbb{N}}
\newcommand\sR{\mathbb{R}}
\newcommand\calA{\mathcal{A}}
\newcommand\calJ{\mathcal{J}}
\newcommand\calL{\mathcal{L}}
\newcommand\calM{\mathcal{M}}
\newcommand\calR{\mathcal{R}}
\newcommand\calS{\mathcal{S}}
\newcommand\sequ{\boldsymbol{\mathrm{u}}}
\newcommand\norm[1]{\left\| #1 \right\|}
\newcommand\gradJ{\nabla_{\vtheta} J(\vtheta)}
\newcommand\gradJat[1]{\gradJ \vert_{\vtheta = #1}}
\newcommand\hessJ{\nabla_{\vtheta}^2 J(\vtheta)}
\newtheorem{theorem}{Theorem}
\newcommand\copyrighttext{%
	\footnotesize \textcopyright 2024 IEEE. Personal use of this material is permitted. 
	Permission from IEEE must be obtained for all other uses, in any current or future media, 
	including reprinting/republishing this material for advertising or promotional purposes, 
	creating new collective works, for resale or redistribution to servers or lists, 
	or reuse of any copyrighted component of this work in other works.	
}
\newcommand\copyrightnotice{%
	\begin{tikzpicture}[remember picture,overlay]
		\node[anchor=south,yshift=10pt] at (current page.south) {\fbox{\parbox{\dimexpr\textwidth-\fboxsep-\fboxrule\relax}{\copyrighttext}}};
	\end{tikzpicture}%
}
\author{
    Dean Brandner and Sergio Lucia
}
\title{\LARGE \bf Reinforced Model Predictive Control via\\Trust-Region Quasi-Newton Policy Optimization}
\begin{document}
\maketitle
\thispagestyle{empty}
\pagestyle{empty}

\begin{abstract}
    Model predictive control can optimally deal with nonlinear systems under consideration of constraints.
    The control performance depends on the model accuracy and the prediction horizon.
    Recent advances propose to use reinforcement learning applied to a parameterized model predictive controller to recover the optimal control performance even if an imperfect model or short prediction horizons are used.
    However, common reinforcement learning algorithms rely on first order updates, which only have a linear convergence rate and hence need an excessive amount of dynamic data.
    Higher order updates are typically intractable if the policy is approximated with neural networks due to the large number of parameters.
    In this work, we use a parameterized model predictive controller as policy, and leverage the small amount of necessary parameters to propose a trust-region constrained Quasi-Newton training algorithm for policy optimization with a superlinear convergence rate.
    We show that the required second order derivative information can be calculated by the solution of a linear system of equations.
    A simulation study illustrates that the proposed training algorithm outperforms other algorithms in terms of data efficiency and accuracy.
    \blfootnote{The authors are with the chair of Process Automation Systems at the department of Biochemical and Chemical Engineering, TU Dortmund University, 44227 Dortmund, Germany (e-mail: dean.brandner@tu-dortmund.de; sergio.lucia@tu-dortmund.de). \\ This work was funded by the Deutsche Forschungsgemeinschaft (DFG, German Research Foundation) – 466380688 – within the Priority Program “SPP 2331: Machine Learning in Chemical Engineering”.}
\end{abstract}

\copyrightnotice

\section{Introduction}
Optimal control strategies such as model predictive control~(MPC) enable the control of nonlinear systems while taking constraints into rigorous consideration.
MPC repeatedly solves the underlying optimal control problem at each time instance and applies the first control action to the plant~\cite{rawlingsModelPredictiveControl2020a}.
However, a good performance typically requires an accurate system model and a large prediction horizon, which can render the optimization problem intractable for real-time applications.
Real-time capability can be recovered by, e.g. using simpler system models or a shorter prediction horizon, both at the expense of accuracy for faster computation.

While MPC relies on the prediction of a state trajectory using a system model, reinforcement learning provides model-free methods to solve the dynamic optimization problem, as for instance policy optimization~\cite{suttonReinforcementLearningIntroduction2018}.
To do so, an agent computes an action according to its policy and applies the action to an environment.
The agent's policy is then updated iteratively based on the next state and stage cost to find the optimal policy.
State-of-the-art performance for control tasks with continuous action spaces using deterministic policies can be obtained using neural networks~(NNs) to approximate the policy~\cite{lillicrapContinuousControlDeep2016}.
In these algorithms, the NN parameters are updated iteratively using a deterministic policy gradient algorithm~\cite{silverDeterministicPolicyGradient2014} until the parameters converge.
Due to the mostly random initialization of the weights and biases of NNs, their lack of structure, and the linear convergence rate of gradient descent algorithms~\cite{furmstonApproximateNewtonMethods2016}, the demand for training data is usually extremely high in reinforcement learning.

Different studies suggest to decrease the demand of data by taking more elaborate update steps such as natural policy gradients~\cite{NIPS2001_4b86abe4, j.andrewbagnellCovariantPolicySearch2003}, which scales the gradient by the inverse of the Fisher information matrix, or Quasi-Newton update steps~\cite{furmstonApproximateNewtonMethods2016, pmlr-v100-jha20a, kordabadQuasiNewtonIterationDeterministic2022}, which scales the gradient by the inverse of an approximation of the Hessian.
Although showing practical improvements, natural policy gradient methods still have a linear convergence rate.
Quasi-Newton methods however can have a superlinear convergence rate, which can significantly reduce the demand on training data.
Standard implementations of reinforcement learning algorithms rely on heavily parameterized NNs as policy approximators, which can render the training process for second order methods intractable due to the large resulting matrices and linear systems of equations.
For this reason, first order optimization methods are almost exclusively considered in literature.

In this work, we propose to use a parameterized MPC as policy approximator instead of large NNs, as it has been recently proposed~\cite{grosDataDrivenEconomicNMPC2020, kordabadReinforcementLearningMPC2023b, brandnerReinforcementLearningCombined2023a}.
The central advantage of this strategy is that the parameterized MPC is an optimization problem in which the different parts, such as the objective or the constraint functions, can be parameterized. This leads typically to significantly less parameters than if large NNs are considered as policy approximators.
Tools from reinforcement learning can then be used to recover the optimal policy by updating the MPC parameters, even if the system model is inaccurate or a short prediction horizon is used. 
In addition, using MPC as a policy approximator profits from a reasonably good initial policy when expert knowledge is supplied, e.g. in the form of a rough dynamic model.
However, it appears that still a significant amount of data is typically required for the MPC policy to converge when employing first order updates.
Alleviating this challenge is the main motivation of this work.

The main contributions of our work are the following.
We exploit the small number of parameters, which typically arise when using MPC as a policy approximator in reinforcement learning, by using Quasi-Newton update steps to reduce the demand of training data.
We propose a method to calculate the second order sensitivities of the optimal control actions with respect to the parameters to compute an approximation of the deterministic policy Hessian.
We integrate the approximation in a trust-region constrained Quasi-Newton policy optimization algorithm for episodic reinforcement learning to further improve the data efficiency and accuracy.

The paper is structured as follows.
Section~\ref{sec:Background} introduces the background of Markov decision processes and MPC as a policy approximator.
Section~\ref{sec:Quasi-Newton-Iteration} shows how Quasi-Newton update steps can be computed and introduces the prerequisites.
In section~\ref{sec:Trust-Region-Algorithm} we show how trust region constrained Quasi-Newton updates can be embedded in an episodic reinforcement learning setting.
Lastly, we demonstrate the performance of the proposed algorithm in section~\ref{sec:Case-Study} before we summarize the results in section~\ref{sec:Conclusion}.

\section{Background} \label{sec:Background}

\subsection{Markov Decision Processes}

Reinforcement learning can solve Markov decision processes via interaction of an agent with an environment.
A transition to the subsequent state~$\vs' \in \calS \subseteq \sR^{n_{\vs}}$ in the possibly stochastic environment is modelled as a transition possibility distribution~$p(\vs'\vert \vs, \va)$ with current state~$\vs \in \calS \subseteq \sR^{n_{\vs}}$ and action~$\va \in \calA \subseteq \sR^{n_{\va}}$.
In addition to the next state, the environment also responds with a scalar stage cost~$\ell(\vs, \va)\in\sR$ also known as negative reward, which indicates how good a defined objective is fulfilled when being in state~$\vs$ and taking action~$\va$.
Also, the stage cost can penalize constraint violations by large costs.

Action~$\va$ is computed via the agent's policy~$\vpi:\calS \rightarrow \calA$.
The objective is to find the optimal policy~$\vpi^\ast$, which minimizes the expected closed-loop cost~$J(\vpi)$.
For an episodic process of~$N_{\mathrm{ep}} \in \sN$ steps, the closed-loop cost is defined as 
\begin{align}
	J(\vpi) = \sE_{\vs_0 \sim S_0} \left[ \sum_{i=0}^{N_{\mathrm{ep}}} \gamma^i \ell(\vs_i, \vpi(\vs_i))\right] \label{eq:CLC}
\end{align}
with $\gamma \in (0, 1]$ being a discount factor.
The operator~$\sE_{\vs_0 \sim S_0}\left[\cdot\right]$ denotes the expected value taken over the initial states~$\vs_0$ of an episode when sampled from some distribution~$S_0$.
The optimal policy can then be obtained via
\begin{align}
	\vpi^* = \arg \min_{\vpi} J(\vpi).
\end{align}
The state-value function~$V^{\vpi}$ now gives information on the closed-loop cost given
state~$\vs$ and following policy~$\vpi$, while the action-value function~$Q^{\vpi}(\vs, \va)$ gives information on the closed-loop cost given
state~$\vs$, taking action~$\va$, and following policy~$\vpi$ afterwards.
They are recursively defined using the Bellman equations as 
\begin{subequations}
	\begin{align}
		V^{\vpi}(\vs) &= Q^{\vpi}(\vs, \vpi(\vs)),\\
		Q^{\vpi}(\vs, \va) &= \ell(\vs, \va) + \gamma \sE_{\vs'} \left[V^{\vpi}(\vs')\right]. \label{eq:Definition_Q-function}
	\end{align}
\end{subequations}

\subsection{Parameterized MPC as Policy Approximator in Reinforcement Learning}

MPC is an advanced control scheme, which repeatedly computes a sequence of optimal control actions~$\sequ^* = (\vu_0^*, \ldots \vu_{N-1}^*)^\top$ with $\vu_i^* \in \calA$ by solving~\eqref{eq:Parameterized_OCP} 
at each time instance~$t_k$ and applies the first control action~$\vu_0^*$ to the plant, so $\va = \vu_0^*$
\begin{subequations}
	\begin{align}
		\sequ_{\vtheta}^*(\vs) = \arg \min_{\sequ} & \quad \gamma ^N \left(V_{\mathrm{f}, \vtheta} (\vx_N) + \vw_{\mathrm{f}}^\top \vsigma_N \right) \notag \\
		& \quad + \sum_{k = 0}^{N-1} \gamma^k \left( \ell_{\vtheta} (\vx_k, \vu_k) +\vw^\top \vsigma_k \right) \label{eq:Parameterized_OCP_objective}\\
		\mathrm{s.t.} & \quad \vx_{k+1} = \hat{\vf}_{\vtheta}(\vx_k, \vu_k), \quad \vx_0 = \vs \label{eq:Parameterized_OCP_system}\\
		& \quad \vh_{\vtheta}(\vx_k, \vu_k) \leq \vsigma_k, \quad \vsigma_k \geq 0, \label{eq:Parameterized_OCP_constraints}\\
		& \quad  \vh_{\mathrm{f},\vtheta}(\vx_N) \leq \vsigma_N, \label{eq:Parameterized_OCP_terminal_constraints} \quad \vsigma_N  \geq 0.
	\end{align} \label{eq:Parameterized_OCP}
\end{subequations}
The objective function~\eqref{eq:Parameterized_OCP_objective} is composed of the sum of discounted stage costs~$\ell_{\vtheta}(\vx_k, \vu_k)\geq 0$ over a finite horizon~$N-1 \in \sN$, and the discounted terminal cost~$V_{\mathrm{f}, \vtheta}(\vx_N)\geq 0$.
The states~$\vx_k$ 
evolve following the underlying system model $\hat{\vf}_{\vtheta}: \calS \times \calA \rightarrow \calS$ starting from a given initial state $\vx_0 = \vs \in \calS$ shown in~\eqref{eq:Parameterized_OCP_system}\, while satisfying constraints $\vh_{\vtheta}(\vx_k, \vu_k) \in \sR^{n_{\vh}}$ at each time instance~\eqref{eq:Parameterized_OCP_constraints} and $\vh_{\mathrm{f}, \vtheta}(\vx_N) \in  \sR^{n_{\vh_{\mathrm{f}}}}$ at the end of the horizon~\eqref{eq:Parameterized_OCP_terminal_constraints}.
The constraints~\eqref{eq:Parameterized_OCP_constraints} and~\eqref{eq:Parameterized_OCP_terminal_constraints} are relaxed as soft constraints by~$\vsigma_k$ and~$\vsigma_N$.

All functions with index~$\vtheta \in \sR^{n_{\vtheta}}$ are freely parameterizable.
Since it is shown in~\cite{grosDataDrivenEconomicNMPC2020} that the parameterized MPC~\eqref{eq:Parameterized_OCP} can approximate the optimal policy, 
reinforcement learning can be used to adapt the parameters~$\vtheta$ such that the closed-loop cost is minimized.

\subsection{Iterative Policy Optimization in Reinforcement Learning}
Reinforcement learning considers different options to compute the agent's optimal behaviour.
One method is called policy optimization~\cite{suttonReinforcementLearningIntroduction2018}, which 
uses an approximation~$\vpi_{\vtheta}$
to learn the optimal policy~$\vpi^*$.
The policy parameters~$\vtheta$ are then updated iteratively using the update vector~$\Delta \vtheta \in \sR^{n_{\vtheta}}$ according to the general update scheme %
\begin{equation}
    \vtheta \gets \vtheta + \Delta \vtheta.\label{eq:General_Update_Rule}
\end{equation}

A commonly used update rule to minimize the closed-loop cost is motivated by gradient descent~\cite{suttonReinforcementLearningIntroduction2018}
\begin{equation}
    \Delta \vtheta = - \alpha \nabla_{\vtheta} J(\vtheta) , \label{eq:GradientDescentUpdate}
\end{equation}
with learning rate~$\alpha > 0$, and the deterministic policy gradient~$\nabla_{\vtheta} J(\vtheta) \in \sR^{n_{\vtheta}}$, which is derived in~\cite{silverDeterministicPolicyGradient2014} as
\begin{equation}
    \nabla_{\vtheta} J(\vtheta) = \sE_{\vs} \left[\nabla_{\vtheta} \vpi^\top_{\vtheta} (\vs) \nabla_{\va} Q^{\vpi_{\vtheta}} (\vs, \va)\vert_{\va = \vpi_{\vtheta}(\vs)}\right]. \label{eq:DPG}
\end{equation}
Since we propose to use a parameterized MPC~\eqref{eq:Parameterized_OCP} as policy approximators, the Jacobian~$\nabla_{\vtheta} \vpi_{\vtheta}$ requires to differentiate the solution of~\eqref{eq:Parameterized_OCP} with respect to its parameters.
Section~\ref{subseq:FirstOrderSensForMPC} shows how these first order sensitivities can be computed.
As we use NNs in this work to approximate the Q-function, automatic differentiation can be used to obtain~$\nabla_{\va}Q$.

Second order methods typically have higher convergence rates and hence require less data.
One can derive an update of the form
\begin{equation}
    \Delta \vtheta = - \alpha {\nabla_{\vtheta}^2 J(\vtheta)}^{-1} \nabla_{\vtheta} J(\vtheta), \label{eq:Newton_Step_Update_rule}
\end{equation}
which is also known as a Newton step.
An exact expression for the deterministic policy Hessian~$\hessJ \in \sR^{n_{\vtheta} \times n_{\vtheta}}$ is derived in~\cite{kordabadQuasiNewtonIterationDeterministic2022}.

\section{Quasi-Newton Iteration for Policy Optimization} \label{sec:Quasi-Newton-Iteration}
Due to computational complexity, the exact deterministic policy Hessian~$\hessJ$ is typically intractable~\cite{kordabadQuasiNewtonIterationDeterministic2022}.
However, under some conditions the convergence rate can still be superlinear even if the deterministic policy Hessian is not known exactly but only an approximation~$\mH(\vtheta) \approx \nabla_{\vtheta}^2 J(\vtheta)$.
The update then looks similar to~\eqref{eq:Newton_Step_Update_rule} %
\begin{align}
    \Delta \vtheta = - \alpha {\mH(\vtheta)}^{-1}  \nabla_{\vtheta} J(\vtheta). \label{eq:QuasiNewtonUpdate}
\end{align}
The deterministic policy Hessian can be approximated as~\cite{kordabadQuasiNewtonIterationDeterministic2022} 
\begin{align}
    \mH(\vtheta) &= \sE_{\vs}\left[ \nabla_{\vtheta}^2 \vpi_{\vtheta} (\vs) \otimes  \nabla_{\va} Q^{\vpi_{\vtheta}} (\vs, \va)\vert_{\va = \vpi_{\vtheta}} + \right.  \notag \\
    & \left. \ldots \nabla_{\vtheta} \vpi^\top_{\vtheta} (\vs) \nabla_{\va}^2 Q^{\vpi_{\vtheta}}(\vs, \va) \vert_{\va = \vpi_{\vtheta}} \nabla_{\vtheta} \vpi_{\vtheta} (\vs)\right], \label{eq:DPH_Approx}
\end{align}
which can still give a superlinear convergence rate under some assumptions~\cite{kordabadQuasiNewtonIterationDeterministic2022}.
The expression requires the second order sensitivity tensor~$\nabla_{\vtheta}^2\vpi_{\vtheta}(\vs) \in \sR^{n_{\vtheta} \times n_{\vtheta} \times n_{\va}}$, as well as the first order sensitivity matrix~$\nabla_{\vtheta} \vpi$.
The $\otimes$ operator denotes the tensor vector product~\cite{kordabadQuasiNewtonIterationDeterministic2022}.
Section~\ref{subseq:SecondOrderSensForMPC} introduces a method to compute these second order sensitivities.

\subsection{First Order Sensitivities of Nonlinear Programs} \label{subseq:FirstOrderSensForMPC}
The deterministic policy gradient~\eqref{eq:DPG} and approximate Hessian~\eqref{eq:DPH_Approx} require~$\nabla_{\vtheta} \vpi_{\vtheta}(\vs)$ and~$\nabla^2_{\vtheta} \vpi_{\vtheta}(\vs)$, which are the first and second order sensitivities of the solution of~\eqref{eq:Parameterized_OCP}.

As a general case of~\eqref{eq:Parameterized_OCP}, consider the nonlinear program~\eqref{eq:GeneralOP} with the objective function~$\Phi:\sR^{n_{\vz}} \times \sR^{n_{\vp}} \rightarrow [0, \infty)$, decision variables~$\vz~\in \sR^{n_{\vz}}$, parameters~$\vp \in \sR^{n_{\vp}}$, 
and the equality and inequality constraints~$\vh:\sR^{n_{\vz}} \times \sR^{n_{\vp}} \rightarrow \sR^{n_{\vh}}$ and $\vg:\sR^{n_{\vz}} \times \sR^{n_{\vp}} \rightarrow \sR^{n_{\vg}}$
\begin{subequations}
\label{eq:GeneralOP}
    \begin{align}
        \vz^* (\vp) = \arg \min_{\vz} & \quad \Phi(\vz, \vp) \\
        \mathrm{s.t.} & \quad \vh(\vz, \vp) = 0 \\
        & \quad \vg(\vz, \vp) \leq 0 .
    \end{align}
\end{subequations}

Let $\vz^*(\vp)$ denote the solution of~\eqref{eq:GeneralOP} in dependency of the parameter vector~$\vp$ and let $\calL: \sR^{n_{\vz}} \times \sR^{n_{\vg}} \times \sR^{n_{\vh}} \times \sR^{n_{\vp}} \rightarrow \sR$ be the Lagrangian
associated to~\eqref{eq:GeneralOP} with Lagrange multipliers~$\vlambda \in \sR^{n_{\vg}}, \vnu \in \sR^{n_{\vh}}$
\begin{align}
    \calL(\vz, \vlambda, \vnu, \vp) =  \Phi(\vz, \vp)  + \vlambda^\top \vg(\vz, \vp) + \vnu^\top \vh(\vz, \vp), \label{eq:Lagrangian}
\end{align}
then the optimal primal-dual solution vector~${\vxi^*}^\top = [{\vz^*}^\top,  {\vlambda^*}^\top,  {\vnu^*}^\top] \in \sR^{n_{\vxi}}$ with~$n_{\vxi} = n_{\vz} + n_{\vg} + n_{\vh}$ satisfies the KKT-conditions~\cite{nocedalNumericalOptimization2006}.
When omitting the inequalities of the KKT-conditions, the reduced KKT-conditions can be considered as an implicit function~$\mF: \sR^{n_{\vxi}} \times \sR^{n_{\vp}}\rightarrow \sR^{n_{\vxi}}$
    \begin{gather}
        \mF(\vxi^*(\vp), \vp) = \begin{pmatrix} \nabla_{\vz^*} \calL(\vz^*, \vlambda^*, \vnu^*, \vp) \\ \vh(\vz^*, \vp) \\ \vlambda^* \odot \vg(\vz^*, \vp) \end{pmatrix} = 0. \label{eq:ImplicitKKT} %
    \end{gather}    
The $\odot$ operator denotes the Hadamard product. %

Via implicit differentiation of~\eqref{eq:ImplicitKKT}~\cite{fiaccoSensitivityStabilityAnalysis1990}, the first order sensitivity matrix~$\nabla_{\vp} \vxi^* (\vp) \in \sR^{n_{\vxi} \times n_{\vp}}$ of the primal-dual solution with respect to the parameters can be obtained by solving the linear system of equations
\begin{equation}
    \nabla_{\vxi^*} \mF \, \nabla_{\vp} \vxi^* = -  \nabla_{\vp} \mF . \label{eq:SensLinSys}
\end{equation}
The coefficient matrix~$\nabla_{\vxi*}\mF \in \sR^{n_{\vxi} \times n_{\vxi}}$ and the right hand side matrix~$\nabla_{\vp} \mF \in \sR^{n_{\vxi} \times n_{\vp}}$ are the Jacobians of the reduced KKT-conditions~$\mF$ with respect to the optimal primal-dual solution~$\vxi^*$ and the parameters~$\vp$ of~\eqref{eq:GeneralOP} respectively.

\subsection{Second Order Sensitivities of Nonlinear Programs} \label{subseq:SecondOrderSensForMPC}
The second order sensitivity tensor~$\nabla_{\vp}^2 \vxi^* \in \sR^{n_{\vp} \times n_{\vp} \times n_{\vxi}}$ of~\eqref{eq:GeneralOP} can be computed by consideration of the differentiated KKT conditions~\eqref{eq:SensLinSys} as an implicit function~$\tilde{\mF}: \sR^{n_{\vxi}} \times \sR^{n_{\vp}} \rightarrow \sR^{n_{\vxi} \times n_{\vp}}$
\begin{align}
    \tilde{\mF} (\vxi^*(\vp), \vp) =  {\nabla_{\vxi^*} \mF} \, \nabla_{\vp} \vxi^* +  \nabla_{\vp} \mF = 0. \label{eq:ImplicitFunction_2}
\end{align}
We define the matrix formulation~$\mS\in \sR^{n_{\vxi} \times n_{\vp}^2}$ of the second order sensitivity tensor~$\nabla_{\vp}^2 \vxi^*$ as
\begin{align}
	\mS =
	\begin{bmatrix}
		\frac{\partial^2 \vxi^*}{\partial \vp_1 \partial \vp} & \ldots &
		\frac{\partial^2 \vxi^*}{\partial \vp_{n_{\vp}} \partial \vp}
	\end{bmatrix}. \label{eq:SecondOrderSensitivityMatrix}
\end{align}
In the following we contribute the expression for the matrix formulation~$\mS$ of the second order sensitivity tensor~$\nabla_{\vp}^2 \vxi^*$.

\begin{theorem}[Second Order Sensitivities] \label{th:Second_Order_Sensitivity_Matrix}
    ~\\Given the primal-dual solution~$\vxi^*(\vp)$ of~\eqref{eq:GeneralOP} and the differentiated KKT conditions~\eqref{eq:ImplicitFunction_2}, the matrix formulation of the second order sensitivities~$\mS$ of~\eqref{eq:GeneralOP} can be obtained by
    \begin{align}
        \nabla_{\vxi^*} \mF \, \mS = - \mC.\label{eq:SecondOrderSens}
    \end{align}
    The right hand side block matrix $\mC \in \sR^{n_{\vxi} \times n_{\vp}^2}$ is composed of submatrices $\mC_j \in \sR^{n_{\vxi} \times n_{\vp}}$, with $j=1,\ldots,n_{\vp}$ given as
    \begin{subequations}
        \begin{gather}
        \mC = \begin{bmatrix}
            \mC_1 &\ldots & \mC_{n_{\vp}}
        \end{bmatrix}, \\
        \mC_j = \mD_j + \mE_j \nabla_{\vp} \, \vxi^* , \label{eq:Definition_C}\\
        \mD_j = \frac{\partial^2 \mF}{\partial \vp_j \partial \vp} + \begin{bmatrix}
            \frac{\partial^2 \mF}{\partial \vp_1 \partial \vxi^*} \, \frac{\partial \vxi^*}{\partial \vp_j} &
            \ldots &
            \frac{\partial^2 \mF}{\partial \vp_{n_{\vp}} \partial \vxi^*} \, \frac{\partial \vxi^*}{\partial \vp_j}
        \end{bmatrix},\\
        \mE_j = \frac{\partial^2 \mF}{\partial \vp_j \partial \vxi^*} + \begin{bmatrix}
            \frac{\partial^2 \mF}{\partial \vxi^*_1 \partial \vxi^*} \frac{\partial \vxi^*}{\partial \vp_j} & 
            \ldots &
            \frac{\partial^2 \mF}{\partial \vxi^*_{n_{\vxi}} \partial \vxi^*} \frac{\partial \vxi^*}{\partial \vp_j}
        \end{bmatrix}. \label{eq:E-matrix}
    \end{gather}
    \end{subequations}
\end{theorem}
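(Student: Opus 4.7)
The plan is to differentiate the matrix identity $\tilde{\mF}(\vxi^*(\vp), \vp) = 0$ from~(\ref{eq:ImplicitFunction_2}) a second time with respect to each component $\vp_j$ of the parameter vector. Because $\vxi^*$ depends implicitly on $\vp$, the total derivative combines a partial derivative with a chain-rule contribution,
\begin{align*}
    \frac{\mathrm{d}\tilde{\mF}}{\mathrm{d}\vp_j} = \frac{\partial \tilde{\mF}}{\partial \vp_j} + \frac{\partial \tilde{\mF}}{\partial \vxi^*}\,\frac{\partial \vxi^*}{\partial \vp_j} = 0,
\end{align*}
and stacking the $n_{\vp}$ resulting slabs side by side should reproduce~(\ref{eq:SecondOrderSens}), with the block carrying the genuine second-order sensitivity appearing as $\mS$ on the left and all remaining terms assembling into $\mC$ on the right.

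First I would differentiate the additive term $\nabla_{\vp}\mF$ from~(\ref{eq:ImplicitFunction_2}) with respect to $\vp_j$. Its $k$-th column is $\partial \mF/\partial \vp_k$, so the chain rule yields $\partial^2\mF/(\partial \vp_j\,\partial \vp_k) + (\partial^2 \mF/(\partial\vp_k\,\partial \vxi^*))\,\partial\vxi^*/\partial \vp_j$, which is exactly the $k$-th column of $\mD_j$ as defined in~(\ref{eq:Definition_C}). Concatenating all $n_{\vp}$ columns thus recovers $\mD_j$ as a whole.

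Next I would differentiate the product $\nabla_{\vxi^*}\mF \cdot \nabla_{\vp}\vxi^*$ by the product rule. Taking the total derivative of $\nabla_{\vxi^*}\mF$ column-by-column, noting that its $l$-th column is $\partial \mF/\partial \vxi^*_l$, gives $\partial^2\mF/(\partial \vp_j\,\partial \vxi^*_l) + (\partial^2\mF/(\partial \vxi^*_l\,\partial \vxi^*))\,\partial\vxi^*/\partial \vp_j$, which matches exactly the $l$-th column of $\mE_j$ from~(\ref{eq:E-matrix}). Multiplying on the right by the factor $\nabla_{\vp}\vxi^*$ that is held fixed in this branch of the product rule yields $\mE_j\,\nabla_{\vp}\vxi^*$. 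The remaining branch keeps $\nabla_{\vxi^*}\mF$ and differentiates $\nabla_{\vp}\vxi^*$ with respect to $\vp_j$, producing $\partial^2\vxi^*/(\partial \vp_j\,\partial \vp)$, which is precisely the $j$-th block of $\mS$ in~(\ref{eq:SecondOrderSensitivityMatrix}).

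Combining all contributions gives $\nabla_{\vxi^*}\mF \, (\partial^2 \vxi^*/(\partial \vp_j\,\partial \vp)) = -(\mD_j + \mE_j\,\nabla_{\vp}\vxi^*) = -\mC_j$, and concatenating these identities for $j = 1,\ldots,n_{\vp}$ yields the claimed formula $\nabla_{\vxi^*}\mF\,\mS = -\mC$. The main obstacle is purely bookkeeping: tracking matrix shapes and index orderings while differentiating a matrix-valued function whose entries are themselves first derivatives, and recognising that the column-wise assembly lines up exactly with the prescribed block structure for $\mD_j$, $\mE_j$, and $\mC_j$. Provided $\nabla_{\vxi^*}\mF$ is invertible, which is the same LICQ/SOSC-type condition that already justifies~(\ref{eq:SensLinSys}) for the first-order sensitivities, the linear system~(\ref{eq:SecondOrderSens}) uniquely determines $\mS$.
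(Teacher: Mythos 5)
Your proposal is correct and follows essentially the same route as the paper: both differentiate the first-order sensitivity identity $\tilde{\mF}(\vxi^*(\vp),\vp)=0$ once more with respect to the parameters, identify the columns of $\mD_j$ and $\mE_j$ as the total derivatives of the columns of $\nabla_{\vp}\mF$ and $\nabla_{\vxi^*}\mF$ along the solution path, and stack the resulting slices into $\nabla_{\vxi^*}\mF\,\mS=-\mC$. The only organizational difference is that you take the total derivative of the full matrix $\tilde{\mF}$ with respect to a single $\vp_j$ via the product rule, whereas the paper slices out the column $\tilde{\mF}_j$ and differentiates it with respect to all of $\vp$ (equivalent by symmetry of mixed second derivatives); your version sidesteps the paper's slightly delicate argument that the terms $\frac{\partial \mF}{\partial \vxi^*}\,\frac{\partial^2 \vxi^*}{\partial \vxi^*_l \partial \vp_j}$ vanish, but the computation and the resulting block structure are identical.
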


\begin{proof}
	See appendix.
\end{proof}

The second order sensitivities can therefore be obtained by solving the linear system of equations~\eqref{eq:SecondOrderSens}.
The matrix $\nabla_{\vxi^*}\mF$ is the same as
in~\eqref{eq:SensLinSys}.
The block matrix~$\mC$ depends on the first order sensitivities~$\nabla_{\vp}\vxi^*$, which must be obtained first, and mixed derivatives of~$\mF$ with respect to $\vp$ and~$\vxi^*$.

To compute the approximation of the deterministic policy Hessian~\eqref{eq:DPH_Approx}, the general optimization problem~\eqref{eq:GeneralOP} must be cast into~\eqref{eq:Parameterized_OCP} with $\vz^* = \sequ^*$ and $\vp = \vtheta$.
The relevant sensitivities~$\nabla_{\vtheta} \vpi = \nabla_{\vtheta} \vu^*_0$ and $\nabla_{\vtheta}^2 \vpi = \nabla_{\vtheta}^2 \vu_0^*$ can be extracted from the relevant row of~$\nabla_{\vp} \vxi^*$ or from the relevant blocks of~$\nabla_{\vtheta}^2 \vxi^*$, which are computed via~\eqref{eq:SensLinSys} and~\eqref{eq:SecondOrderSens}. %

\subsection{Q-Function Approximation}
To compute $\gradJ$ and $\mH(\vtheta)$ as in~\eqref{eq:DPG} and~\eqref{eq:DPH_Approx}, the action-value function~$Q^{\vpi}(\vs, \va)$ must be approximated, e.g. using a generic function approximator $Q_{\vv}(\vs, \va) \approx Q^{\vpi}(\vs, \va)$ with parameters~$\vv\in \sR^{n_{\vv}}$.
The episodic setting of the proposed policy optimization algorithm reduces the approximation task to a supervised learning task.

We propose to build an approximation of the Q-function by first learning the stage cost~$\hat{Q}^{\vpi}_0(\vs, \va)$ and then improving step by step by taking $k$-step look-aheads~$\hat{Q}^{\vpi}_{k}(\vs, \va)$ based on the previous approximation~$Q_{\vv_{k-1}}$.
The $k$-step look-ahead estimation~$\hat{Q}^{\vpi}_k(\vs, \va)$ of~$Q^{\vpi}(\vs, \va)$ is  recursively defined as
\begin{align}
    \hat{Q}_k^{\vpi} (\vs, \va) =
    \begin{cases}
        \ell(\vs, \va) & \mathrm{if~} k = 0, \\
        \ell(\vs, \va) + \gamma Q_{\vv_{k-1}}(\vs', \va') & \mathrm{else}.
    \end{cases} \label{eq:LabelGeneration}
\end{align}

Let~$\calR$ be a so called replay buffer of length~$n_{\calR} \in \sN$, gathering the last $n_\calR$ encountered transitions~$\left< \vs, \va, \ell, \vs'\right>$ using an arbitrary exploration policy~$\vpi_{\exp}$
\begin{align}
    \calR = \left\{ \left<\vs, \va, \ell, \vs' \right>^{(i)} \right\}_{i=1}^{n_\calR},
\end{align}
then we can define the set~$\calM$ of encountered transitions~$\left< \vs, \va, \ell, \vs'\right>$ and suggested actions~$\va' = \vpi(\vs')$ as
\begin{align}
    \calM = \left\{\left<\vs, \va, \ell, \vs', \va' \right> \vert  \left<\vs, \va, \ell, \vs'\right> \in \calR\right\}.
\end{align}

The parameter~$\vv_k$ can then be obtained by the solution of 
\begin{align}
    \min_{\vv_k} ~~ \sE_{\left<\vs, \va, \ell, \vs', \va' \right> \sim \calM} \left[\Psi\left(\hat{Q}^{\vpi}_k (\vs, \va), Q_{\vv_k} (\vs, \va)\right)\right]. \label{eq:k-step-regression}
\end{align}
The function~$\Psi:\sR \times \sR \rightarrow \sR$ can be any suitable function for regression tasks, e.g. mean squared error.
The solution process then alternates between label generation~\eqref{eq:LabelGeneration} and parameter regression~\eqref{eq:k-step-regression}.
The process is repeated until the desired horizon~$N_Q \in \sN$ is reached.
The choice of~$N_Q$ is a trade-off between approximation accuracy and computational cost.
The steps are summarized in Algorithm~\ref{alg:Q-Training}.

\begin{algorithm}
    \caption{Q-Function approximation} \label{alg:Q-Training}
    \begin{algorithmic}
        \Require $\calR, \vpi(\vs), N_Q$
        \State $\calM \gets \O$  %
        \ForAll{$\left<\vs, \va, \ell,  \vs' \right> \in \calR$} 
            \State $\va' \gets \vpi(\vs')$
            \State $\calM \gets \calM \cup \left\{ \left<\vs, \va, \ell, \vs', \va' \right> \right\}$
        \EndFor
        \For{$k = 0, \ldots, N_Q$}
            \State Compute $\hat{Q}_k^{\vpi}(\vs, \va)$ with \eqref{eq:LabelGeneration} on~$\calM$
            \State Learn $Q_{\vv_k}(\vs, \va)$ by solving~\eqref{eq:k-step-regression}
        \EndFor
    \end{algorithmic}
\end{algorithm}

\section{A Trust-Region Quasi-Newton Policy Optimization Algorithm} \label{sec:Trust-Region-Algorithm}
A rigorous choice of the learning rate~$\alpha$ or a restriction of the maximum update step length can improve the stability of iterative optimization algorithms and can reduce the number of iterations until convergence.
Two common approaches are line search and trust-region methods.
It turns out that line search methods cannot be used properly in reinforcement learning because the objective function~$J(\vtheta)$ is unknown.
In contrast to that, trust-region methods can adapt the maximum step length based on measurements of the closed-loop cost of each episode only.

The proposed trust-region Quasi-Newton policy optimization algorithm consists three steps below: 1)~Sampling of closed-loop trajectories,~2) Trust region update,~3) MPC parameter update.
These steps are repeated until convergence to a stationary point $\norm{\gradJ}_2 \leq \epsilon$ with $\epsilon > 0$.

\textit{1)~Sampling:}
The objective is to minimize~$J(\vtheta)$.  %
For a Quasi-Newton update step~\eqref{eq:QuasiNewtonUpdate}, $\gradJat{\vtheta_j}$ and $\mH(\vtheta_j)$ must be known for the current policy~$\vpi_{\vtheta_j}$.
All three terms require to take expected values over a distribution of initial conditions~$S_0$ as shown in~\eqref{eq:CLC}, \eqref{eq:DPG} and~\eqref{eq:DPH_Approx}.
To take the expected value over the initial conditions, a fixed set of initial conditions~$\calS_0 = \{\vs_0^{(i)} \vert \vs_0^{(i)} \sim S_0\}_{i=0}^{N_{\calS_0}}$ is defined, which will be used to evaluate the closed-loop cost in step 2.
For each initial condition in~$\calS_0$ a full trajectory of length~$N_{\mathrm{ep}}$ is conducted with an exploration policy~$\vpi_{\vtheta_j, \exp}(\vs)$.
All observed tuples~$\left<\vs, \va, \ell, \vs' \right>$ are stored in a replay buffer~$\calR$ of length~$n_{\calR} \in \sN$.
Also, the measured cumulative cost~$V^{\vpi_{\vtheta_j}}(\vs_0)$ is added to the set~$\calJ_j$.

\textit{2) Trust-Region radius update}:
The trust-region radius~$\delta_j > 0$ limits the maximum length of the update step~$\norm{\Delta \vtheta_j}_2$.
If the observed closed-loop cost~$J(\vtheta_{j})$ is close to the predicted closed-loop cost~$q(\vtheta_{j})$, the prediction can be trusted, hence~$\delta_j$ can be increased, and vice versa.
The ratio~$\rho_j$ measures the agreement of the exact closed-loop cost function~$J(\vtheta)$ and the closed-loop cost model~$q(\vtheta) \approx J(\vtheta)$
\begin{align}
	\rho_j = \frac{J(\vtheta_{j-1}) - J(\vtheta_j)}{J(\vtheta_{j-1}) - q(\vtheta_j)}. \label{eq:Rho}
\end{align}
The better the model fits the observation, the closer the ratio gets to one.
The trust-region radius is then updated depending on the observed value of~$\rho_j$ as commonly done in optimization algorithms~\cite{nocedalNumericalOptimization2006}.

\textit{3) Update of parameters:}
To update the parameters~$\vtheta$, the closed-loop cost is approximated as~$q(\vtheta)$.
The approximate second order Taylor expansion of~$J(\vtheta)$ around~$\vtheta_j$ reads as
\begin{align}
    q(\vtheta_j + \Delta \vtheta_j) = J(\vtheta_j) + \Delta \vtheta_j^\top \gradJat{\vtheta_j} +\notag \\
    \ldots \frac{1}{2} \Delta \vtheta_j^\top \mH(\vtheta_j) \Delta \vtheta_j .
\end{align}
The update step~$\Delta \vtheta_j$ at iteration $j$ within the iterative optimization algorithm is then the solution of the trust-region constrained optimization problem
\begin{subequations}
    \label{eq:TR_General_OP}
    \begin{align}
        \Delta \vtheta_j = \arg \min_{\Delta \hat{\vtheta}_j} & \quad q(\vtheta_{j} + \Delta \hat{\vtheta}_j) \label{eq:TR_General_Objective}\\
        \mathrm{s.t.} & \quad \| \Delta \hat{\vtheta}_j \|_2 \leq \delta_j. \label{eq:TR_General_Radius}
    \end{align}
\end{subequations}
Since $\gradJat{\vtheta_j}$ and~$\mH(\vtheta_j)$ require $\nabla_{\vtheta} \vpi_{\vtheta_j} (\vs)$, $\nabla^2_{\vtheta} \vpi_{\vtheta_j} (\vs)$, $\nabla_{\va} Q^{\vpi_{\vtheta_j}} (\vs, \va)$ and $\nabla^2_{\va} Q^{\vpi_{\vtheta_j}} (\vs, \va)$, all these must be computed for all items in the replay buffer~$\calR$.
First, $Q^{\vpi_{\vtheta_j}}(\vs, \va)$ is approximated by~$Q_{\vv}(\vs, \va)$ based on Algorithm~\ref{alg:Q-Training} using~$\calR$.
Then, the policy's action~$\va_{\vpi} = \vpi_{\vtheta_j}(\vs)$  together with~$\nabla_{\vtheta}\vpi_{\vtheta}(\vs)$ and $\nabla^2_{\vtheta} \vpi_{\vtheta_j} (\vs)$ are computed according to \eqref{eq:SensLinSys} and~\eqref{eq:SecondOrderSens} for all states~$\vs$ in~$\calR$.
Lastly, $\nabla_{\va} Q^{\vpi_{\vtheta_j}} (\vs, \va)$ and $\nabla^2_{\va} Q^{\vpi_{\vtheta_j}} (\vs, \va)$ are computed for all~$\vs$ and their related~$\va_{\vpi}$.
Once all subterms are gathered, $\gradJat{\vtheta_j}$ and $\mH(\vtheta_j)$ can be calculated with~\eqref{eq:DPG} and~\eqref{eq:DPH_Approx}.
The update~$\Delta \vtheta_j$ is then obtained from~\eqref{eq:TR_General_OP}.

These steps only have to be applied if the proposed update~$\Delta \vtheta_{j-1}$ improves the closed-loop cost that is $\rho_j >0$.
Otherwise, if $\rho_j < 0$, the update is reverted, so $\vtheta_{j-1} \gets \vtheta_{j-1} - \Delta \vtheta_{j-1}$, such that the old values of~$\gradJat{\vtheta_{j-1}}$ and~$\mH(\vtheta_{j-1})$ can be reused in~\eqref{eq:TR_General_OP} but with a smaller trust-region radius $\delta_j < \delta_{j-1}$.
Algorithm~\ref{alg:TR_for_Quasi_Newton_RL} summarizes all steps.
\begin{algorithm}[ht]
    \caption{Trust-Region Quasi-Newton Iteration} \label{alg:TR_for_Quasi_Newton_RL} %
    \begin{algorithmic}
        \Require Empty replay buffer~$\calR$ of length $n_\calR$
        \Require Trust-Region parameters: $\delta_0, \epsilon, \rho_0 > 0$
        \Require Policy parameters: $\vtheta_0$, NN parameters: $\vv_0$
        \State $j \gets 0$
        \While{$j = 0 \textbf{ or } \norm{\gradJat{\vtheta_{j-1}}}_2 > \epsilon$}
            \State $\calJ_j \gets \O$ \Comment{Sampling}
            \ForAll{$\vs_0 \in \calS_0$}
                \State Sample full trajectory for~$\vs_0$ with $\vpi_{\mathrm{exp}}$
                \State Store all $\left<\vs, \va, \ell, \vs'\right>$ in $\calR$
                \State $\calJ_j \gets \calJ_j \cup \left\{V^{\vpi_{\vtheta_j}}(\vs_0) \right\}$ 
            \EndFor
            \State Compute mean~$J(\vtheta_j)$ over $\calJ_j$  \Comment{Trust-Region update} %
            \If{$j > 0$} 
                \State Compute~$\rho_j$ with \eqref{eq:Rho} using $J(\vtheta_j)$ and $J(\vtheta_{j-1})$
                \State Update~$\delta_j$ based on $\rho_j$
            \EndIf
            \If{$\rho_j > 0$}  \Comment{Update computation} %
                \State Train $Q_{\vv}(\vs, \va)$ with Algorithm~\ref{alg:Q-Training} using~$\calR$
                \ForAll{$\left<\vs, \va, \ell, \vs' \right> \in \calR$}
                    \State $\va_{\vpi} \gets \vpi_{\vtheta_j}(\vs)$
                    \State Compute $\nabla_{\vtheta} \vpi_{\vtheta_j}(\vs)$ with~\eqref{eq:SensLinSys}
                    \State Compute $\nabla_{\vtheta}^2 \vpi_{\vtheta_j}(\vs)$ with~\eqref{eq:SecondOrderSens}
                    \State Get $Q_{\vv}(\vs, \va_{\vpi}), \nabla_{\va} Q_{\vv}(\vs, \va_{\vpi}), \nabla^2_{\va} Q_{\vv}(\vs, \va_{\vpi})$
                \EndFor

                \State Get~$\gradJat{\vtheta_j}$ from~\eqref{eq:DPG}
                \State Get~$\mH(\vtheta_j)$ from~\eqref{eq:DPH_Approx}
            \Else  %
                \State $\vtheta_{j-1} \gets \vtheta_{j-1} - \Delta \vtheta_{j-1}$
                \State $\gradJat{\vtheta_j} \gets\gradJat{\vtheta_{j-1}}$ 
                \State $\mH(\vtheta_j) \gets \mH(\vtheta_{j-1})$
            \EndIf
            \State Get $\Delta \vtheta_j$ from~\eqref{eq:TR_General_OP}
            \State $\vtheta_j \gets \vtheta_{j-1} + \Delta \vtheta_j$
            \State $j \gets j + 1$
        \EndWhile
    \end{algorithmic}
\end{algorithm}

\section{Case Study} \label{sec:Case-Study}
We consider a two dimensional linear system model to demonstrate the performance of the proposed algorithm
\begin{gather}
    \vs' =
    \begin{pmatrix}
        0.9 & 0.35\\
        0 & 1.1
    \end{pmatrix}  \vs +
    \begin{pmatrix}
        0.0813 \\
        0.2
    \end{pmatrix} \va . \label{eq:EnvSystemModel}
\end{gather}
The control goal is to regulate the states and actions to the origin, while not violating the constraints 
    \begin{gather}
        \vh(\vs, \va) = \begin{pmatrix}
            \vs_{\mathrm{lb}} - \vs \\
            \vs - \vs_{\mathrm{ub}} \\
            \va_{\mathrm{lb}} - \va \\
            \va - \va_{\mathrm{ub}}
        \end{pmatrix} \leq 0,
    \end{gather}
with $\vs_{\mathrm{lb}} = (0, -1)^\top$, $\vs_{\mathrm{ub}} = (1, 1)^\top$, $\va_{\mathrm{lb}}= -1$ and $\va_{\mathrm{ub}} = 1$.

The stage cost~$\ell(\vs, \va)$ penalizes the deviation from the origin and constraint violations
\begin{gather}
    \ell(\vs, \va) = \vs^\top \vs + \frac{1}{2} \va^\top \va + 100^\top\max{ \left\{0, \vh(\vs, \va)\right \}}.
\end{gather}
The $\max(\cdot)$ operator is applied elementwise to each row of the vectors.

The agent with MPC structure~\eqref{eq:Parameterized_OCP} is constructed using
\begin{subequations}
    \begin{gather}
        \ell_{\vtheta}(\vx_k, \vu_k) = \vx_k^\top \vx_k + \frac{1}{2} \vu_k^\top \vu_k, \quad \vw = \vw_\mathrm{f} = 100, \\
        V_{\mathrm{f}, \vtheta}(\vx_N) = \vx_N^\top \begin{pmatrix}
            5.7 & 1.3 \\ 1.3 & 3.3
        \end{pmatrix} \vx_N, \quad \gamma = 1, \\
        \hat{\vf}_{\vtheta}(\vx_k, \vu_k) = \begin{pmatrix}
            a_{11} & a_{12} \\ 0 & a_{22}
        \end{pmatrix} \vx_k + 
        \begin{pmatrix}
            b_1 \\ b_2
        \end{pmatrix} \vu_k + \begin{pmatrix}
            d_1 \\ d_2
        \end{pmatrix}, \\
        \vh_{\vtheta}(\vx_k, \vu_k) = \begin{pmatrix}
            \vs_{1, \mathrm{lb}} + \Delta \vx_1 - \vx_{1,k}\\
            \vs_{2, \mathrm{lb}} - \vx_{2,k}\\
            \vx_k - \vs_{\mathrm{ub}}\\
            \va_{\mathrm{lb}} - \vu_k\\
            \vu_k - \va_{\mathrm{ub}}
        \end{pmatrix}, \\
        \vh_{\mathrm{f}, \vtheta}(\vx_N) = \vh_{\vtheta, \vx} (\vx_N), \quad  N = 10 .
    \end{gather}
\end{subequations}
The resulting parameter vector~$\vtheta$ is defined as
    \begin{align}
        \vtheta &= \left(a_{11} , a_{12} , a_{22} , b_1 , b_2 , d_1 , d_2 , \Delta \vx_1\right)^\top,
    \end{align}
with initial values~$\vtheta_0 = \left(1, 0.25, 1, 0.1, 0.3, 0, 0, 0\right)^\top$.

The Q-function is approximated using a feed-forward NN with two hidden layers with 20 neurons each and tanh-activation function.
The inputs~$(\vs^\top, \va^\top)^\top$ and labels~$\hat{Q}^{\vpi}_k(\vs, \va)$ are all scaled using custom scalers.
Note that the scaling also affects~$\nabla_{\va} Q_{\vv}(\vs, \va)$ and~$\nabla_{\va}^2 Q_{\vv}(\vs, \va)$, which has to be taken into account.  %
The horizon of the Q-function is set as~$N_Q=10$, which is a trade-off between computational complexity and accuracy.
The Huber loss function~\cite{huberRobustEstimationLocation1964} is used in~\eqref{eq:k-step-regression} together with the Adam optimizer~\cite{DBLP:journals/corr/KingmaB14}.

Algorithm~\ref{alg:TR_for_Quasi_Newton_RL} is initialized with the values given in Table~\ref{tab:HP}.
The trust-region radius update follows the suggestion in~\cite{nocedalNumericalOptimization2006} with~$\delta_{\mathrm{max}}$ denoting the maximum allowed stepsize.
\begin{table}[h]
    \centering
    \caption{Hyperparameters of proposed Algorithm~\ref{alg:TR_for_Quasi_Newton_RL}.}
    \label{tab:HP}
    \begin{tabular}{c c | c c}
        \toprule
         Parameter & Value & Parameter & Value\\ \midrule \midrule 
         $n_{\mathrm{IC}}$ & $50$ & $N_{\mathrm{ep}}$ & $50$ \\
         $\epsilon$ & $10^{-6}$ & $n_{\calR}$ & $250$ \\
         $\delta_0$ & $10^{-2}$ & $\delta_{\mathrm{max}}$ & $10^{-1}$\\\bottomrule
    \end{tabular}
\end{table}

Algorithm~\ref{alg:TR_for_Quasi_Newton_RL} is compared to three training algorithms:
\begin{enumerate}
    \item First order updates without trust region~\eqref{eq:GradientDescentUpdate}
    \item First order updates with trust region
    \item Second order updates without trust region~\eqref{eq:QuasiNewtonUpdate}
\end{enumerate}
All agents use the same initial conditions, but vary in their hyperparameter settings.
In case 1), the learning rate is set to~$\alpha = 10^{-4}$, which compromises stability and convergence speed.
In case 2), the agent is initialized with a trust-region radius of~$\delta_0 = 10^{-3}$ and a maximum trust-region radius of~$\delta_{\max}= 10^{-1}$.
In case 3), the learning rate is set to~$\alpha = 10^{-2}$, which is the largest possible investigated learning rate without losing stability of the training process.
All methods are compared to a benchmark MPC, which uses the exact model~\eqref{eq:EnvSystemModel} and a prediction horizon of~$N=50$, and the untrained MPC using~$\vtheta = \vtheta_0$.

Figure~\ref{fig:CLC} shows the decrease of~$J(\vtheta_j)$ over the reinforcement learning iterations~$j$ for all training algorithms and compares them to the benchmark MPC.
It can be seen that the proposed algorithm (right subfigure, solid line) outperforms all other methods with respect to convergence speed as it needs less than 20~iterations to converge to the performance of the benchmark MPC while the others still keep decreasing.
Also, less oscillations are observed during the training process, which suggests a higher stability during training.
\begin{figure}[!h]
    \centering
    \resizebox{\columnwidth}{!}{
        \input{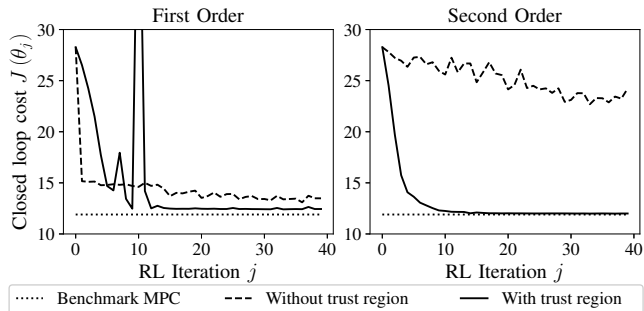}
    }
    \caption{Evolution of the closed-loop cost~$J(\vtheta_j)$ over the reinforcement learning (RL) iterations~$j$. The plots show the results for first order training (left) and second order training (right) with and without a trust region.}
    \label{fig:CLC}
\end{figure}

The differently trained MPC agents are evaluated on closed-loop simulations of a test set.
The test set is created by taking 2,500 randomly distributed initial states in the feasible state space and performing an episode using the benchmark MPC.
All initial conditions which encounter any infeasible point in their closed-loop trajectory are discarded.
The final test set consists of $n_{\mathrm{T}} = 1,579$ initial conditions with a total of~$n_{\mathrm{P}} = 78,950$~points.
The performance measures are the number of infeasible trajectories~$n_{\mathrm{T, if}}$, the closed-loop cost on all~$n_{\mathrm{T}}$ trajectories~$J$, the number of infeasible points~$n_{\mathrm{P, if}}$, the maximum constraint violations~$\mathrm{CV_{max}}$, and the average constraint violation~$\overline{\mathrm{CV}}$ on the set of infeasible points.
The number of infeasible points~$n_{\mathrm{P,if}}$ is the portion of all points~$n_{\mathrm{P}}$ for which the MPC agent controls the system into the infeasible state space.
A trajectory is then infeasible and added to the number of infeasible trajectories~$n_{\mathrm{T,if}}$ if any point of the closed-loop trajectory is an infeasible point.

The results are summarized in Table~\ref{tab:TestResults}.
The agent trained with the proposed approach (\nth{2} order (TR)), is feasible on all initial conditions and outperforms all other trained agents with respect to the obtained closed-loop cost, which is almost the closed-loop cost obtained by the benchmark MPC.
\begin{table}[h]
    \centering
    \caption{Performance with respect to the closed-loop cost $J$, number of infeasible trajectories $n_\mathrm{T,if}$ and points $n_\mathrm{P,if}$ as well as maximum and average constraint violation $\mathrm{CV_{max}}$, $\overline{\mathrm{CV}}$.} \label{tab:TestResults}
    \begin{tabular}{rccccc}
        \toprule
                                     & $J$                  &   $n_{\mathrm{T, if}}$ &   $n_{\mathrm{P, if}}$      &   $\mathrm{CV_{max}}$ & $\overline{\mathrm{CV}}$\\ \midrule \midrule
        Benchmark                    & $3.61$               &   $0$                  &   $0$                       &   $0$                 &   $-$     \\
        Untrained                    & $10.66$              &   $1579$               &   $12,001$                  &   $18.6$              &   $0.86$  \\
        \nth{1} order                & $4.34$               &   $0$                  &   $0$                       &   $0$                 &   $-$     \\
        \nth{1} order (TR)           & $3.88$               &   $0$                  &   $0$                       &   $0$                 &   $-$     \\
        \nth{2} order                & $8.89$               &   $1539$               &   $4,876$                   &   $15.6$              &   $1.57$  \\
        \nth{2} order (TR)           & $\mathbf{3.64}$      &   $0$                  &   $0$                       &   $0$                 &   $-$     \\ \bottomrule
    \end{tabular}
\end{table}

We want to emphasize that each update step is performed offline and does not influence the online solution time of the applied MPC controller.
Also, the offline computation time of each update step of the proposed trust-region constrained Quasi-Newton updates is observed to be in the same order of magnitude as the established first order updates.

All implementations were done in Python, using CasADi~\cite{Andersson2019}, do-mpc~\cite{fiedlerDompcFAIRNonlinear2023a}, Ipopt~\cite{wachterImplementationInteriorpointFilter2006}, and Tensorflow~\cite{tensorflow2015-whitepaper}.
The code to reproduce the results is available online\footnote{\url{https://github.com/DeanBrandner/ECC24_TR_improved_QN_PO_for_MPC_in_RL}}.

\section{Conclusion} \label{sec:Conclusion}
In this work, we propose a trust-region Quasi-Newton policy optimization algorithm for episodic reinforcement learning using a parameterized MPC as a policy approximator.
We show that the computation of the second order sensitivity tensor for nonlinear programs boils down to the solution of a linear system of equations.
We apply the proposed algorithm to an example system and show empirically that the proposed algorithm outperforms other investigated algorithms with respect to the data efficiency and also with respect to the achieved control performance of the learned policy.

Future work will investigate how the method scales to larger and potentially nonlinear systems.
Also, different options to approximate the Q-function such as the MPC scheme itself 
as well as a direct comparison of the proposed method with established state-of-the-art reinforcement learning algorithms will be investigated.

\bibliography{bibliography.bib}

\appendix[Proof of Theorem~\ref{th:Second_Order_Sensitivity_Matrix}]
\begin{proof}
    Consider the $j$-th column vector~$\tilde{\mF}_j(\vxi^*(\vp), \vp) \in \sR^{n_{\vxi}}$ of the implicit matrix function~$\tilde{\mF}(\vxi^*(\vp), \vp)$ defined in~\eqref{eq:ImplicitFunction_2}, then implicit differentiation results in
    \begin{gather}
    	\frac{\partial \tilde{\mF}_j}{\partial \vxi^*} \, \nabla_{\vp} \vxi^* + \frac{\partial \tilde{\mF}_j}{\partial \vp} = 0. \label{eq:Initial_Implicit_Function}
    \end{gather}
    Lets define matrix~$\mE_j$ from~\eqref{eq:E-matrix} as~$\mE_j = \frac{\partial \tilde{\mF}_j}{\partial \vxi^*}$
    \begin{gather}
        \mE_j =
        \frac{\partial \tilde{\mF}_j}{\partial \vxi^*} = \frac{\partial}{\partial \vxi^*}
        \left( \frac{\partial \mF}{\partial \vxi^*} \, \frac{\partial \vxi^*}{\partial \vp_j} +  \frac{\partial \mF}{\partial \vp_j} \right).
    \end{gather}
    Application of the differentiation operator to each summand and using the interchangeability of partial derivatives~\cite{bronstejnTaschenbuchMathematik2016}
    \begin{gather}
        \mE_j = \frac{\partial}{\partial \vxi^*}
        \left( \frac{\partial \mF}{\partial \vxi^*} \, \frac{\partial \vxi^*}{\partial \vp_j}\right) +  \frac{\partial^2 \mF}{\partial \vp_j \partial \vxi^*}.
    \end{gather}
    Application of the chain rule~\cite{bronstejnTaschenbuchMathematik2016}
    gives
    \begin{gather}
        \mE_j = \begin{bmatrix}
        \left(\frac{\partial^2 \mF}{\partial \vxi^*_1 \partial \vxi^*} \, \frac{\partial \vxi^*}{\partial \vp_j} + \frac{\partial \mF}{\partial \vxi^*} \frac{\partial^2 \vxi^*}{\partial \vxi^*_1 \partial \vp_j}\right)^\top \\
        \vdots \\
        \left(\frac{\partial^2 \mF}{\partial \vxi^*_{n_{\vxi}} \partial \vxi^*} \, \frac{\partial \vxi^*}{\partial \vp_j} + \frac{\partial \mF}{\partial \vxi^*} \frac{\partial^2 \vxi^*}{\partial \vxi^*_{n_{\vxi}} \partial \vp_j} \right)^\top
        \end{bmatrix}^\top + \frac{\partial^2 \mF}{\partial \vp_j \partial \vxi^*} , \label{eq:FullE_j}
    \end{gather}
    which can be simplified by looking at the right summand of each matrix entry. The Jacobian is derived to be $\frac{\partial \vxi^*}{\partial \vxi^*} = \mI$, leading to 0 when the derivative with respect to $\vp_j$ is applied. The simplified expression then reads as
    \begin{gather}
        \mE_j = \frac{\partial^2 \mF}{\partial \vp_j \partial \vp} + \begin{bmatrix}
            \frac{\partial^2 \mF}{\partial \vxi^*_1 \partial \vxi^*} \, \frac{\partial \vxi^*}{\partial \vp_j} &
            \ldots &
            \frac{\partial^2 \mF}{\partial \vxi^*_{n_{\vxi}} \partial \vxi^*} \, \frac{\partial \vxi^*}{\partial \vp_j}
        \end{bmatrix}. \label{eq:Finished_E}
    \end{gather}
    The procedure is also conducted for $\frac{\partial \tilde{\mF}_j}{\partial \vp}$ until~\eqref{eq:FullE_j} giving
    \begin{gather}
        \frac{\partial \tilde{\mF}_j}{\partial \vp} = \begin{bmatrix}
            \left(\frac{\partial^2 \mF}{\partial \vp_1 \partial \vp} \, \frac{\partial \vxi^*}{\partial \vp_j} + \frac{\partial \mF}{\partial \vxi^*} \, \frac{\partial^2 \vxi^*}{\partial \vp_j \partial \vp_1} \right)^\top \\
            \vdots \\
            \left(\frac{\partial^2 \mF}{\partial \vp_{n_{\vp}} \partial \vp} \, \frac{\partial \vxi^*}{\partial \vp_j} + \frac{\partial \mF}{\partial \vxi^*} \, \frac{\partial^2 \vxi^*}{\partial \vp_j \partial \vp_{n_{\vp}}} \right)^\top
        \end{bmatrix}^\top +
        \frac{\partial^2 \mF}{\partial \vp_j \partial \vp}.
    \end{gather}
    The right summands of the matrix entries do not vanish
    \begin{subequations} \label{eq:dFdp}
        \begin{gather}
            \frac{\partial \tilde{\mF}_j}{\partial \vp} =
            \frac{\partial \mF}{\partial \vxi^*} \, \frac{\partial^2 \vxi^*}{\partial \vp_j \partial \vp} + \mD_j,\\
            \mathrm{with} \quad \mD_j = \frac{\partial^2 \mF}{\partial \vp_j \partial \vp} + \begin{bmatrix}
                \frac{\partial^2 \mF}{\partial \vp_1 \partial \vp} \, \frac{\partial \vxi^*}{\partial \vp_j} &
                \ldots &
                \frac{\partial^2 \mF}{\partial \vp_{n_{\vp}} \partial \vp} \, \frac{\partial \vxi^*}{\partial \vp_j}
            \end{bmatrix}.
        \end{gather}
    \end{subequations}
    Plugging~\eqref{eq:Finished_E} and~\eqref{eq:dFdp} into~\eqref{eq:Initial_Implicit_Function} then delivers
    \begin{gather}
        \mE_j \nabla_{\vp}\vxi^* + \mD_j + \frac{\partial \mF}{\partial \vxi^*} \frac{\partial^2 \vxi^*}{\partial \vp_j \partial \vp} = 0.
    \end{gather}
    Rearranging the equation as a linear system of equations for the $j$-th slice of the second order sensitivity tensor leads to
    \begin{subequations}
        \begin{gather}
            \nabla_{\vxi^*}\mF \, \frac{\partial^2 \vxi^*}{\partial \vp_j \partial \vp} = - \mC_j,\\
            \mathrm{with} \quad \mC_j = \mD_j + \mE_j \nabla_{\vp} \vxi^*.
        \end{gather}
    \end{subequations}
    The matrix~$\nabla_{\vxi^*} \mF$ is equal for all slices of the second order sensitivity tensor. Hence, according to~\eqref{eq:SecondOrderSensitivityMatrix}, the matrix slices~$\frac{\partial^2 \vxi^*}{\partial \vp_j \partial \vp}$ can be stacked into a matrix representation~$\mS$ of the second order sensitivity tensor.
    The same is done for the right-hand-side matrices~$\mC_j$ according to~\eqref{eq:Definition_C}.
    All this combined leads to
   	\begin{gather}
    	\nabla_{\vxi^*} \mF \, \mS = - \mC.
   	\end{gather}
\end{proof}
\end{document}